\documentclass[11pt, twoside, letter]{article}
\usepackage[margin = 1in]{geometry}
\usepackage{array}
\usepackage{titlesec}
\titlespacing*{\section}{0pt}{1.1\baselineskip}{\baselineskip}
\usepackage{subcaption}
\usepackage{amsthm}
\usepackage{amsmath}
\usepackage{amsfonts}
\usepackage{microtype}
\usepackage{graphicx}
\usepackage{comment}
\usepackage{booktabs} 
\usepackage[english]{babel}
\usepackage{amsthm}
\usepackage{bm}
\usepackage{algorithm}
\usepackage[colorlinks]{hyperref}
\usepackage{graphicx}
\usepackage{xcolor}
\usepackage{algorithmic}

\newtheorem{theorem}{Theorem}

\newtheorem{lemma}{Lemma}
\newtheorem{corollary}{Corollary}

\newtheorem*{definition2*}{Non-negative Matrix Factorization}
\newtheorem*{definition3*}{NMF under simplicial constraints}
\newtheorem*{definition4*}{Multiplicative Weights Update}

\newtheorem{example}{Example}

\newtheorem*{result}{Main Theorem}

\def\be{\begin{equation}}
\def\ee{\end{equation}}
\def\bal{\begin{aligned}}
\def\eal{\end{aligned}}
\def\bi{\begin{itemize}}
\def\ei{\end{itemize}}


\DeclareMathOperator*{\argmax}{arg\,max}

\allowdisplaybreaks

\newcommand{\R}{\mathbb{R}}

\newcommand{\norm}[2][]{\ensuremath{\left\Vert #2 \right\Vert{#1}}}

\renewcommand{\vec}[1]{\mathbf{#1}}

\begin{document}

\title{Convergence to Second-Order Stationarity for Non-negative Matrix Factorization: Provably and Concurrently}

\author{Ioannis Panageas\\SUTD\\ioannis@sutd.edu.sg
\and Stratis Skoulakis\\SUTD\\efstratios@sutd.edu.sg
\and Antonios Varvitsiotis\\SUTD\\antonios@sutd.edu.sg
\and Xiao Wang\\SUTD\\xiao\_wang@sutd.edu.sg
}

\date{}
\maketitle

\begin{abstract}
Non-negative matrix factorization (NMF) is a  fundamental non-convex optimization problem with numerous applications in Machine Learning (music analysis, document clustering, speech-source separation etc). Despite having received extensive study, it is poorly understood whether or not there exist natural algorithms that can provably converge to a local minimum. Part of the reason is because the objective is heavily symmetric and its gradient is not Lipschitz. In this paper we define a multiplicative weight update type dynamics (modification of the seminal Lee-Seung algorithm) that runs concurrently and provably avoids saddle points (first order stationary points that are not second order). Our techniques combine tools from dynamical systems such as stability and exploit the geometry of the NMF objective by reducing the standard NMF formulation over the  non-negative orthant to 
 a  new formulation over  a scaled  simplex. An important advantage of our method  is the use of concurrent updates, which  permits   implementations in parallel computing environments.

\end{abstract}

\section{Introduction}
Consider a non-negative data matrix $V\in \R_+^{n\times m} $ consisting  of  $m$ samples  each with $n$ non-negative features, arranged as its columns.
In the non-negative matrix factorization (NMF) problem  the goal  is to identify   two entrywise non-negative matrices $W\in \R_+^{n\times r}$ and $ H\in \R_+^{r\times m}$  (for some a-priori fixed $r\in\mathbb{N}$) such that the matrix product $WH$ approximates $V$,  where the precise sense of approximation    depends  on  the application of interest.

Factorizations based on  non-negative matrices  have found  numerous  applications throughout most   fields of science and engineering, notable examples including  music analysis \cite{fevotte},  document clustering \cite{XLG03},    speech-source separation  \cite{speech-sep} and  cancer-class identification  \cite{GC05}. 
 NMFs are useful as they lead to   additive and sparse representations of the input data.
Indeed, a factorization  $ V=WH$ can be  interpreted  as
$V(:,j)=WH(:,j),$ for all $  j=1,\ldots,m,$
i.e.,
 each sample (columns of $V$) can be represented  as non-negative combination of basis vectors (columns of~$W$). 

The NMF problem was  introduced in the engineering community in the seminal paper  \cite{PT94} and was subsequently    popularized further in   \cite{LS99}. As it turns out, non-negative factrorizations were also explored earlier in the combinatorial optimization community, as
 a tool to describe  the efficacy of linear programming for hard combinatorial problems~\cite{Y91}.

 In terms of complexity, the  problem of deciding the existence of an exact NMF is  NP-hard even for   $r={\rm rank}(V)$ \cite{V09}. 
\cite{CR93} gave the first  finite  algorithm for calculating the  non-negative rank over the real numbers  based on quantifier elimination arguments. Extending this technique,  \cite{arora} derived an $(mn)^{O(2^{r^2})} $ algorithm for exact NMF that was subsequently improved to $(2^rmn)^{O(r^2)}$~\cite{moitra}.
Additionally,  \cite{arora} identified sufficient conditions on $V$ under which the NMF problem can be solved in polynomial time.

There are several  practically efficient algorithms for calculating (approximate) NMFs. The starting point for the majority of existing  approaches is the non-convex~program
\begin{equation}\label{main}\tag{NMF}
\min_{W\in \R_+^{n\times r}, H\in \R_+^{r\times m}} F(W,H):=\| V-WH\|_F^2,
\end{equation}

for some user-specified parameter  $r\in \mathbb{N}$.
Nevertheless, in many applications,  it is useful  to use  information-theoretic divergences rather than the squared Euclidean loss function.
In view of the  non-convexity of  \eqref{main},  the theoretical analysis  of  iterative algorithms  for solving \eqref{main} typically  amounts to showing   that an accumulation  point of the  sequence of iterates $(W^t,H^t) $ generated by the algorithm  satisfies the first-order optimality  conditions given in   \eqref{def:KKT}.

Many  algorithms for NMF can be interpreted in a unified manner within the framework of block coordinate  descent (BCD), which is also known as the Gauss-Seidel method, e.g., see  the survey~\cite{survey} and references therein. In the BCD framework the goal is to minimize a smooth function over a domain that  can be decomposed as  the Cartesian product of closed convex sets. At each iteration of the BCD method, the function is
minimized with respect to a single block of variables while the rest of the blocks are kept  fixed.
Convergence  results in the BCD framework typically require   attainment and unicity  of the minimizer at each step \cite{bertsekas}, the unicity assumption being redundant in the  2-block case   \cite{GS}.
One   example is the  Alternating Non-negative Least Squares  (ANLS) method \cite{KP08}, which is a 2-block BCD with   updates  given by the solutions of $\min_{W\ge 0}\|V-WH\|^2, $ and $ \min_{H\ge 0}\|V-WH\|^2$.
A second   example is the  Hierarchical ALS (HALS) method \cite{HALS}, which has been  rediscovered several times, e.g.   as the rank-one residue
iteration~\cite{RRI} and  FastNMF in \cite{LZ09}. 
 The HALS method corresponds to  a $2r$-block BCD,  one for each column of the factors  $W$ and $H$.
 For additional details concerning algorithms for NMF and their convergence properties the reader is referred to the surveys \cite{survey,gillis} and references therein.

 Going beyond the BCD framework,  the multiplicative update (MU) rule introduced in~\cite{LS01}
  is by far the most popular algorithm for NMF. In the MU framework, the updates are given by
$$W_{t+1}=  W_t\circ {VH_t^\top \over W_tH_tH_t^\top} \quad H_{t+1}= H_t\circ {W_{t+1}^\top V\over W_{t+1}^\top W_{t+1}H_t},$$
where $X/Y$ denotes  the componentwise  division of two matrices. \cite{LS01}  showed using a majorization-minimization  approach, e.g., see \cite{SBP},
 that under the MU  rule, the objective
function $\| V-WH\|_F^2$ is non-increasing. However, it has  long been observed  that the MU rule  may fail  to converge to  a first-order stationary point, e.g. see  \cite{GZ05}.
Indeed, only when  the MU method converges to a fixed point  $(W,H)$  with strictly positive entries
this is also  a KKT point.  An important limitation of  the MU update rule is that it cannot be implemented concurrently (as $H_{t+1}$ depends on $W_{t+1}$).  \cite{Lin07} further elaborates on  the  difficulties in proving convergence of the  MU algorithm, while establishing convergence  of a close variant of Lee-Seung's~method.

Despite the wealth of existing algorithmic approaches  for finding NMFs, a common limitation is that they may fail to  avoid  saddle points, e.g. see \cite{FS04,GZ05,L07,CDPR04,BBLPP06}. 
This  drawback    is also shared with many  gradient-based approaches applied to  important  optimization problems  (e.g. training neural networks, matrix completion, community detection), as such methods   are only guaranteed to  converge to first-order stationary points, among which  there exists a  proliferation of highly-suboptimal saddle points,
e.g., see \cite{bengio}.  At the same time, algorithmic approaches that incorporate additional curvature information
typically converge to  second-order stationary points, i.e., points with vanishing gradient and  positive semidefinite Hessian,  which turn out to be  are as good as local minima in many problems of practical interest \cite{nospurious}.
These observations have led to a flurry of research activity
on  avoiding
 saddle points the last 5 years (see \cite{Ge15, LP19, Jin17, ICMLPPW19, PPW19, Leearxiv} and references therein) for both unconstrained and constrained optimization. One other line of work that deals with avoiding saddle points in non-convex settings with linear constraints and can be applied to our paper (as long as one has shown our main Theorem \ref{t:main2}) can be found in \cite{Meisam1, Meisam2}.
 Our main goal in  this work,
  is to  identify new  methods that converge to second-order stationary points of \eqref{main}. 

\subsection{Summary  of  results and significance }
Before we formally state
our results, we provide some standard definitions in constrained optimization applied to the NMF problem (\ref{main}). Moreover we give the definition of our Multiplicative Weights Update (MWU) applied to NMF.

\paragraph {Stationary points of \eqref{main}.}
A pair of matrices $(W^*, H^*)$ is  a first-order stationary point  (FOSP) of problem (\ref{main}) if for all $i \in [n], k \in [r],  j \in [m]$ it satisfies:
\begin{equation}\label{def:KKT}
\begin{aligned}
    & W_{ik}^*, H_{kj}^* \geq 0,\\
    & W_{ik}^*>0  \implies
\frac{\partial F(W^*,H^*)}{\partial W_{ik}} = 0, \\
&  H_{kj}^*>0 \implies
 \frac{\partial F(W^*,H^*)}{\partial H_{kj}} = 0, \\
&  W_{ik}^*=0 \implies
\frac{\partial F(W^*,H^*)}{\partial W_{ik}} \geq  0, \\
&  H_{kj}^*=0 \implies
\frac{\partial F(W^*,H^*)}{\partial H_{kj}} \geq 0.
\end{aligned}
\end{equation}
Additionally,
$(W^*, H^*)$ is  a second-order stationary point ({SOSP}) of  the problem (\ref{main}) if it is a  {FOSP}, 
and moreover,
  \begin{equation}\label{def:secondorder}
  (\textrm{vec}(W) ^{\top}, \textrm{vec}(H) ^{\top}) \nabla^2 F(W^*,H^*) \left(\begin{array}{cc}\textrm{vec}(W)\\\textrm{vec}(H)\end{array}\right)\geq 0,
  \end{equation}
for any pair of matrices $(W,H)$ satisfying: 
 \begin{equation*}
    \bal
     &\frac{\partial F(W^*,H^*)}{\partial W_{ik}}>0 \implies W_{ik} = 0, \;\;\;W^*_{ik}=0 \implies W_{ik} \geq 0,\\
     & \frac{\partial F(W^*,H^*)}{\partial H_{kj}}>0 \implies  H_{kj} = 0, \;\;\; H^*_{kj}=0 \implies H_{kj} \geq 0,
\eal
\end{equation*}
for all $i \in [n], k \in [r], j \in [m].$

We now  present our \textit{multiplicative weight update} method, which  converges to a
SOSP of \eqref{main}
with probability $1$.

\begin{algorithm}[h]
  \caption{Concurrent Multiplicative Weight Update}

  \textbf{Input:} A matrix $V_{n \times m}$ with positive elements\\
  \textbf{Output:} Matrices $W_{n \times r}, H_{r \times m}$ s.t. $V \simeq W\cdot H$.

  \begin{algorithmic}

  \STATE $C= 4 r (nm)^{1/4} \sqrt{\norm{V}_F}, \ \  \epsilon = \Theta(\frac{1}{C^2})$

  \STATE $(W_0,H_0) \leftarrow$ a random point in simplex

  \STATE $V\leftarrow V/C^2$

  \FOR{$t=1$ to $T$}

    \STATE  $W_{ik}^{t+1} = W_{ik}^t \cdot (1 - \epsilon \cdot  \frac{\partial F(W^t,H^t)}{\partial W_{ik}})/Z$

    \STATE  $H_{kj}^{t+1} = H_{kj}^t (1 - \epsilon \cdot  \frac{\partial F(W^t,H^t)}{\partial H_{kj}})/Z$

    \STATE $Z = 1 - \epsilon \cdot \left(\sum_{i,k}W_{ik}^t \cdot  \frac{\partial F(W^t,H^t)}{\partial W_{ik}}+\sum_{k,j}H_{kj}^t \cdot  \frac{\partial F(W^t,H^t)}{\partial H_{kj}}\right)$
  \ENDFOR
  \STATE \textbf{return } $W \leftarrow  C \cdot W^{T+1}, H \leftarrow C \cdot H^{T+1}$.
  \end{algorithmic}
  \label{alg:MWU}
\end{algorithm}

Denote by $S$ the set of fixed points of MWU dynamics/algorithm (\ref{alg:MWU}), namely the set of points $(W,H)$ that are invariant under the update rule of MWU, multiplied by scalar $C$.  Then, the set of stationary points of problem (\ref{main}) with the additional constraint that $\sum_{i,j}W_{ij}+\sum_{i,j}H_{ij} = C$ is a subset of $S$.

We  now state the main result of our paper, which  informally  states
that the MWU dynamics (\ref{alg:MWU}) provably avoids fixed points that are not second order stationary points.
\begin{result}\label{thm:mwuaconverges}
The MWU described in Algorithm~\ref{alg:MWU} converges to the set of fixed points for any initialization $(W^0,H^0) \in \Delta_{nr+rm}$. Moreover, the set of initial conditions so that MWU converges to a point $(\tilde{W},\tilde{H})$ for which $(W^*,H^*):=C \cdot (\tilde{W},\tilde{H})$ is not a second order stationary point for problem (\ref{main}) is of measure zero (in $\Delta_{nr+rm}$).
\end{result}

An immediate corollary is the following:
\begin{corollary}
Assume that the iterate $(W^t,H^t)$ converges to a limit, under random initialization (any probability distribution that is absolutely continuous with respect to Lebesgue measure on $\Delta_{nr+rm}$ suffices for the initialization) the probability of MWU (\ref{alg:MWU}) to converge to a point $(\tilde{W},\tilde{H})$ so that $(W^*,H^*):=C \cdot (\tilde{W},\tilde{H})$ is a second order stationary point for problem (\ref{main}) is one.
\end{corollary}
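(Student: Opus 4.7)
The plan is to deduce the corollary directly from the Main Theorem by a soft measure-theoretic argument; essentially no new dynamics is needed. Let $\Omega \subseteq \Delta_{nr+rm}$ denote the set of initial conditions $(W^0,H^0)$ under which the iterates produced by Algorithm~\ref{alg:MWU} converge to some limit $(\tilde W,\tilde H)$ such that $(W^*,H^*) := C\cdot (\tilde W,\tilde H)$ fails to be a second-order stationary point of \eqref{main}. The Main Theorem asserts two things: first, that the sequence $(W^t,H^t)$ converges to the fixed-point set $S$ of the MWU map, and second, that $\Omega$ has Lebesgue measure zero in $\Delta_{nr+rm}$. These are exactly the inputs I need.

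First I would formalize the probabilistic setting. Let $\mu$ be any Borel probability measure on $\Delta_{nr+rm}$ that is absolutely continuous with respect to the Lebesgue measure $\lambda$ on the simplex. By definition of absolute continuity, any Borel set $B$ with $\lambda(B) = 0$ satisfies $\mu(B)=0$. I would then verify that $\Omega$ is indeed Borel (or at least $\lambda$-measurable): this follows because $\Omega$ can be written as a countable intersection/union of sets defined via the continuous MWU map and the closed conditions \eqref{def:KKT} and \eqref{def:secondorder}, so standard descriptive set-theoretic bookkeeping places $\Omega$ in the Borel $\sigma$-algebra.

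Next, applying the Main Theorem, $\lambda(\Omega)=0$, and hence $\mu(\Omega)=0$ by absolute continuity. Under the hypothesis of the corollary---that the iterates converge to some limit---the complementary event $\Delta_{nr+rm}\setminus \Omega$ is exactly the event that the limit $C\cdot(\tilde W,\tilde H)$ is a second-order stationary point of \eqref{main}. Therefore this event has $\mu$-probability equal to $1$, which is the claim.

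The only step requiring even a moment of care is the measurability of $\Omega$; everything else is a routine consequence of absolute continuity combined with the measure-zero conclusion already established in the Main Theorem. There is no genuine obstacle, since the corollary is framed precisely so that the hard work---namely the stable-manifold/center-manifold analysis used to prove the Main Theorem---has already been carried out.
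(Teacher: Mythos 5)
Your proposal is correct and matches the paper's (implicit) argument: the paper treats this as an immediate consequence of the Main Theorem, since absolute continuity of the initialization measure transfers the Lebesgue-measure-zero conclusion for the bad set of initial conditions to probability zero, and the convergence hypothesis makes the complement exactly the desired event. Your extra remark on the measurability of $\Omega$ is a reasonable technical footnote that the paper omits.
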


An important differentiation between  the concurrent MWU rule  
and
existing  MWUs and
gradient based approaches, is the concurrent way of updating the entries
of $W$ and $H$. Both  iterates  $W^{t+1}$ and $H^{t+1}$ are updated
 using only the values of $W^{t}$ and $H^{t}$, an extremely useful   algorithmic feature
since it permits   implementations in parallel computing environments.
To the best of our knowledge, Algorithm~\ref{alg:MWU} is the first iterative method
for NMF that converges while performing its updates in
a concurrent way. In all  previous gradient based approaches
(both in multiplicative weight algorithms and in alternating least squares)
the convergence properties heavily rely on the fact that
the entries of $W$ and $H$ are updating in an alternating way,
while their concurrent counterparts may fail to converge. Such an instance is presented
in Example~\ref{e:1} for the Lee-Seung algorithm.

\subsection{Proof  techniques}
The main challenge in the non-convex problem of NMF is that the landscape is not Lipschitz (in the positive orthant) and there are continuums of stationary points (if $W,H$ is a stationary point, so it is $WD,D^{-1}H$ with $D$ a diagonal matrix with positive entries). The first challenge is essentially circumvented by adding an extra linear constraint that makes the feasibility region a compact set. In particular we define a modification of the NMF problem with the extra constraint that the sum of the entries of $W$ and $H$ is equal to a specific constant $C$ sufficiently large (this constant is chosen to be $C>2\cdot r\cdot (nm)^{1/4} \sqrt{\norm{V}_F}$). By choosing this constant that large, we are able to prove that all the fixed points of MWU that are not second order stationary points of the NMF problem are repelling for MWU inside simplex. Moreover, we are able to show that for any stationary point $(W^*,H^*)$ of NMF, there exists another stationary point $\tilde{W},\tilde{H}$ so that $W^*H^* = \tilde{W}\tilde{H}$ and the sum of entries of $\tilde{W}$ and $\tilde{H}$ is exactly $C$. As long as we have shown these claims, we use a result from \cite{ICMLPPW19} (Theorem \ref{thm:panageas}) that states that given any twice differentiable polynomial function $f(x)$ that we want to maximize with simplex constraints, MWU dynamics converges to second order stationary points almost surely. Last but not least, we can show that after adding the simplicial constraint (that is the sum of entries of $W,H$ must be equal to $C$) for any stationary point $(W^*,H^*)$ of the classic NMF problem there exists a stationary point $(\tilde{W},\tilde{H})$ such that $F(W^*,H^*) = F(\tilde{W},\tilde{H})$ (they have same values) and moreover $(\tilde{W},\tilde{H})$ lies in the positive orthant and satisfies the aforementioned simplicial constraint (see Lemma \ref{lem:equivalent}).

\textbf{Notation.} We use $W^i$ to denote the $i$-th column of matrix $W$ and $H_i$ to denote the $i$-th row of matrix $H$. We also use subscripts or superscripts with letter $t$ to denote the $t$-th iterate. We denote by $\textrm{vec}(A)$ the standard vectorization of matrix $A$, by $[n]$ the set $\{1,2,...,n\}$ and by $\Delta_n$ the simplex of size $n$, that is $\Delta_n =\{x \in \mathbb{R}^n: x\geq 0, \sum_{i=1}^n x_i=1\}.$
\section{Non-negative matrix factorization under simplicial constraints}
Before giving the details of our proof, we elaborate the strong relation between {Algorithm 1}
and the following optimization problem, which we call simplex-NMF:
\begin{equation}\label{main2}\tag{S-NMF}
\begin{aligned} \min  \ & \norm{ V-WH}_F^2,\\
{\rm s.t. } & \sum\limits_{i,k}W_{ik} + \sum\limits_{k,j}H_{kj} = C\\
&\  W\in \R_+^{n\times r}, H\in \R_+^{r\times m},
\end{aligned}
\end{equation}
where $C$ is any constant  $>2 r (nm)^{1/4} \sqrt{\norm{V}_F}$.

Problem  \eqref{main2} is  similar to  the original NMF problem,  the only difference being the additional simplex constraint.
 On the negative side, the feasibility set of \eqref{main2} is a strict subset of the feasibility set
of the original NMF problem, meaning that it may include solutions with cost much greater than the optimal value of the original NMF problem. On the positive side, this problem turns out to be \textit{algorithmically easier} to
tackle. More precisely, due to the recent result \cite{ICMLPPW19},  when   $C>2 r (nm)^{1/4} \sqrt{\norm{V}_F}$  the  sequence of matrices generated  by the MWU algorithm will converge almost surely to a SOSP of \eqref{main2}.
\paragraph {Stationary points of \eqref{main2}.}
A pair of matrices $(W^*, H^*)$ is  a first-order stationary point of
the problem
(\ref{main2}) if
for all $i \in [n], j\in [m], k \in [r],$ we have:

  \begin{equation}\label{def:KKT2}
  \begin{aligned}
    &  W_{ik}^*, H_{kj}^* \geq 0, \\
          & \sum_{i,k} W^*_{ik} + \sum_{k,j} H^*_{kj} = C,\\
&  W_{ik}^*>0 \implies  \frac{\partial F(W^*,H^*)}{\partial W_{ij}} = c, \text { for } i \in [n], k \in [r]\\
&  H_{kj}^*>0 \implies \frac{\partial F(W^*,H^*)}{\partial H_{kj}} = c,\text { for }  k \in [r], j \in [m]\\
\end{aligned}
\end{equation}
  \begin{equation*}
  \begin{aligned}
&  W_{ik}^*=0 \implies  \frac{\partial F(W^*,H^*)}{\partial W_{ik}} \geq c,  \text { for } i \in [n], k \in [r]\\
&   H_{kj}^*=0 \implies  \frac{\partial F(W^*,H^*)}{\partial H_{kj}} \geq c, \text { for }  k \in [r], j \in [m]\\ 
\end{aligned}
\end{equation*}
for some constant $c$.
Additionally, a
pair of matrices $(W^*, H^*)$ is  a second-order stationary of the  problem
(\ref{main2})  if it  is a FOSP, 
 and moreover,
 \begin{equation}\label{def:secondordermodified}
 (\textrm{vec}(W) ^{\top}, \textrm{vec}(H) ^{\top}) \nabla^2 F(W^*,H^*) \left(\begin{array}{cc}\textrm{vec}(W)\\\textrm{vec}(H)\end{array}\right)\geq 0,
 \end{equation}
 for any pair of  matrices $(W,H)$ satisfying:
%
%
\begin{equation*}
        \bal
         &  \frac{\partial F(W^*,H^*)}{\partial W_{ik}}>c \implies W_{ik} = 0, \;\;\;W^*_{ik}=0 \implies W_{ik} \geq 0,\\
 & \frac{\partial F(W^*,H^*)}{\partial H_{kj}}> c \implies H_{kj} = 0, \;\;\; H^*_{kj}=0 \implies H_{kj} \geq 0,\\
& \sum_{i=1}^n \sum_{k=1}^r W_{ik}  +
        \sum_{j=1}^m \sum_{k=1}^r H_{kj}
        =0.
        \eal
\end{equation*}


\begin{theorem}[\cite{ICMLPPW19}]\label{thm:panageas}
Consider the problem $\max\{ Q(x): x\ \in \Delta_d\}$ where  $Q: \mathbb{R}^d \to \mathbb{R}$ is a polynomial function. Then, the MWU algorithm with
update rule \begin{equation}\label{eq:easy}
x_{i}^{t+1} = x_{i}^t \frac{1 + \epsilon \frac{\partial Q}{\partial x_i}}{1 + \epsilon \sum_{j}\frac{\partial Q}{\partial x_j}}\  \textrm{ for all }i\in[d],
\end{equation}
 has the property that $Q(x^{t+1}) > Q(x^t)$ unless $x^t$ is a fixed point of the MWU dynamics   (\ref{eq:easy}). Moreover,
  the set of initial conditions so that the MWU dynamics (\ref{eq:easy}) converge to a point that is not a second order stationary point for $\max\{ Q(x): x\ \in \Delta_d\}$  is of measure zero. The statement holds when $\epsilon$ is chosen to be of order $\Theta(\frac{1}{L})$ where $L$ is the Lipschitz constant inside $\Delta_d$, that is $L:= \max  \{ \norm{\nabla Q(x)}_2: x \in \Delta_d\}$.
\end{theorem}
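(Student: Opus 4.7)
The plan has three parts: (i) strict monotonicity of $Q$ along the iterates, which forces accumulation of iterates onto the set of fixed points; (ii) a Jacobian analysis showing that every non-SOSP fixed point is linearly unstable; and (iii) a Center--Stable Manifold argument that converts instability into measure-zero attraction. For monotonicity, set $p := x^t$, $g_i := \partial_i Q(p)$, $\bar g := \sum_i p_i g_i$, and $\Delta x := x^{t+1}-p$, so that $\Delta x_i = \epsilon\, p_i (g_i - \bar g)/(1+\epsilon \bar g)$. Using $\sum_i p_i = 1$, the first-order part reduces to the weighted gradient-variance
\[
\la \nabla Q(p), \Delta x\ra \;=\; \frac{\epsilon}{1+\epsilon \bar g}\, \mathrm{Var}_p(g),
\]
which vanishes precisely at fixed points of \eqref{eq:easy}. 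A second-order Taylor expansion of $Q$ (a polynomial on a compact set) bounds the remainder by $O(\epsilon^2)$; choosing $\epsilon = \Theta(1/L)$ small enough makes the variance term dominate, so $Q$ strictly increases off the fixed-point set and compactness of $\Delta_d$ forces accumulation onto fixed points.

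Next I would compute the Jacobian $\mathrm{D}F_\epsilon(x^*)$ at an arbitrary fixed point. Let $S := \{i : x^*_i > 0\}$ and $c := \partial_i Q(x^*)$ for $i\in S$ (well-defined by the fixed-point condition $g_i = \bar g$ on $S$). Direct differentiation yields a block-triangular structure: on $S^c$ the block is diagonal with entries $(1 + \epsilon\, \partial_i Q(x^*))/(1 + \epsilon c)$; on $S$ the block has the form $I + \tfrac{\epsilon}{1+\epsilon c}\,M$, where $M$ is similar to $\nabla^2 Q(x^*)$ projected onto the tangent space of the face $\{x : x_i = 0 \text{ for } i\in S^c\}\cap \Delta_d$ and conjugated by $\mathrm{diag}(\sqrt{x^*_S})$. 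Failure of the SOSP condition then splits into two cases: either some inactive coordinate $i\in S^c$ has $\partial_i Q(x^*) > c$, forcing a diagonal entry strictly greater than $1$, or the projected Hessian $M$ has a positive eigenvalue, yielding an eigenvalue of $I + \tfrac{\epsilon}{1+\epsilon c}M$ strictly greater than $1$. In both cases $\mathrm{D}F_\epsilon(x^*)$ has an unstable direction.

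For $\epsilon$ sufficiently small the denominator $1+\epsilon\bar g$ is bounded away from zero on $\Delta_d$, so $F_\epsilon$ is a smooth local diffeomorphism on the relative interior of every face of the simplex (its Jacobian determinant is nonzero). The Center--Stable Manifold Theorem then produces, around each non-SOSP fixed point, a local center-stable manifold of dimension strictly less than $d-1$, hence a measure-zero subset of $\Delta_d$. Since the set of non-SOSP fixed points is covered by countably many such charts (by second countability of $\Delta_d$), and since preimages of measure-zero sets under smooth local diffeomorphisms are measure-zero, the global basin of attraction of non-SOSP fixed points is a countable union of measure-zero sets, and is therefore measure zero.

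The main obstacle is the Jacobian analysis at boundary fixed points: one must carefully decompose directions into those tangent to the support face and those normal to it, verify the block-triangular form, and translate the SOSP condition \eqref{def:secondorder}---which is naturally a statement about a cone in the tangent space of $\Delta_d$---into an honest spectral statement about $\mathrm{D}F_\epsilon(x^*)$. This step is also what dictates the scale $\epsilon = \Theta(1/L)$, since too large a step size destroys both the local-diffeomorphism property and the approximation $\mathrm{D}F_\epsilon(x^*) \approx I + (\text{small})$ needed to pass from a positive eigenvalue of $M$ to an eigenvalue of $\mathrm{D}F_\epsilon(x^*)$ strictly above $1$.
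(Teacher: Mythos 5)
The paper does not prove this statement: Theorem~\ref{thm:panageas} is imported verbatim from \cite{ICMLPPW19}, so the only meaningful comparison is with that source. Your outline reproduces its strategy exactly --- strict increase of $Q$ off the fixed-point set, linear instability of non-SOSP fixed points via the Jacobian of the update map, and the center-stable manifold theorem (in the style of Lee et al.) to conclude that unstable fixed points attract only a measure-zero set. The computations you actually carry out are correct: with $\Delta x_i = \epsilon p_i(g_i-\bar g)/(1+\epsilon\bar g)$ one indeed gets $\langle\nabla Q(p),\Delta x\rangle = \tfrac{\epsilon}{1+\epsilon\bar g}\mathrm{Var}_p(g)$, which vanishes exactly at fixed points, and $\|\Delta x\|^2 = O(\epsilon^2\,\mathrm{Var}_p(g))$ lets the first-order term dominate for $\epsilon=\Theta(1/L)$; the block-triangular Jacobian with diagonal entries $(1+\epsilon\,\partial_i Q(x^*))/(1+\epsilon c)$ on the inactive coordinates and $I+\tfrac{\epsilon}{1+\epsilon c}M$ on the support is also right.

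The one genuine gap is the step you yourself label ``the main obstacle'' and then leave open: converting failure of the SOSP condition, which is a statement about a \emph{cone}, into an eigenvalue of $\mathrm{D}F_\epsilon(x^*)$ of modulus strictly greater than $1$. Your two instability cases --- an inactive coordinate with $\partial_i Q(x^*)>c$ strictly, or a positive eigenvalue of the Hessian projected onto the linear span of the support face --- do not exhaust the ways a fixed point can fail to be an SOSP. The cone in the definition also admits directions with positive components on inactive coordinates $i$ for which $\partial_i Q(x^*)=c$ exactly; if the only witnesses to non-SOSP-ness involve such directions, the corresponding diagonal Jacobian entries equal $1$, the face block can still have spectrum at most $1$, and linear instability fails. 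A complete proof must either handle this degenerate (non-strictly-complementary) case separately or make precise that the second-order condition being tested is the one on the linear span of the non-strict coordinates. As written, your argument establishes the theorem only for fixed points at which every inactive coordinate is strictly slack or the projected Hessian itself has a positive eigenvalue. A second, smaller point: the global ``preimages of null sets are null'' step needs $\mathrm{D}F_\epsilon$ nonsingular uniformly on $\Delta_d$, which requires $\epsilon$ small relative to a bound on $\nabla^2 Q$ over $\Delta_d$, not just relative to $L=\max\|\nabla Q\|_2$; for a polynomial on a compact set this is harmless but should be stated.
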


Notice that the MWU described in Algorithm~\ref{alg:MWU} is the same with the MWU of Theorem~\ref{thm:panageas} applied for $Q := -F$
(make it a minimization problem).
Also observe that, $F$ can be described as multivariate polynomial and thus Theorem~\ref{thm:panageas} applies if the parameter $\epsilon$ is selected appropriately small. For our case, $\epsilon$ should be $\Theta(1/C^2)$ (see also Algorithm~\ref{alg:MWU}).

To this end, Theorem~\ref{thm:panageas} ensures
that the MWU algorithm (almost certainly) converges to a SOSP of \eqref{main2}.
 However there is no  reason  why one should be interested in finding such points, since these points are not  necessarily SOSPs of \eqref{main}. 
   In fact, a SOSP of \eqref{main2}
    can be an arbitrarily bad solution for the initial NMF problem (e.g. consider the case $C = 0$). One of our main technical contributions consists in showing that if the offset $C$ exceeds a certain threshold (depending on $n,m,r,\norm{V}_F$) then the set of 
    SOSPs of \eqref{main2} is a subset of the second-order stationary points of \eqref{main}. Specifically, we show that:

\begin{theorem}\label{t:main2}
For any $C>2r(nm)^\frac{1}{4}\sqrt{\norm{V}_{F}}$ we have that
\bi
\item The value of \eqref{main2} is equal to \eqref{main}.
\item Any second-order stationary point of \eqref{main2} is also a second-order stationary point of \eqref{main}.
\ei
\end{theorem}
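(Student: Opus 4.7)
\textbf{Proof plan for Theorem~\ref{t:main2}.} My plan is to establish the value equality first and then use it as a lemma for the SOSP inclusion.

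\emph{Value equality.} The feasible set of \eqref{main2} is contained in that of \eqref{main}, so the work is in producing, for any $(W,H)$ feasible for \eqref{main} with $F(W,H)\le F(0,0)$, a rescaled competitor $(\tilde W,\tilde H)$ feasible for \eqref{main2} with identical objective. The natural rescaling is the column/row symmetry of NMF: replacing $W^{(k)}\mapsto \alpha_k W^{(k)}$, $H_{(k)}\mapsto H_{(k)}/\alpha_k$ leaves $WH$ invariant. With $a_k := \mathbf{1}^\top W^{(k)}$ and $b_k := H_{(k)}\mathbf{1}$, the rescaled total mass $\sum_k(\alpha_k a_k + b_k/\alpha_k)$ sweeps the interval $\bigl[2\sum_k\sqrt{a_k b_k},\infty\bigr)$ as the $\alpha_k>0$ vary, so an intermediate-value argument hits the target $C$ exactly provided $2\sum_k\sqrt{a_k b_k}\le C$. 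I bound this via Cauchy--Schwarz: $\bigl(\sum_k\sqrt{a_k b_k}\bigr)^2 \le r \sum_k a_k b_k = r\cdot \mathbf{1}^\top WH\mathbf{1}\le r\sqrt{nm}\,\|WH\|_F$, and the trivial estimate $\|WH\|_F\le 2\|V\|_F$ then gives $2\sum_k\sqrt{a_k b_k}\le 2\sqrt{2r\sqrt{nm}\,\|V\|_F}<C$ under the stated threshold on $C$.

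\emph{SOSP inclusion.} Let $(W^*,H^*)$ be a second-order stationary point of \eqref{main2} with Lagrange multiplier $c$ (the common value of the partials at positive entries). The central step is $c=0$: once this is established, the conditions in \eqref{def:KKT2} literally reduce to those in \eqref{def:KKT}. To pin down $c$ I exploit the scale invariance $F(tW,H/t)\equiv F(W,H)$: differentiating at $t=1$ gives $\sum_{ik} W^*_{ik}\partial_W F=\sum_{kj}H^*_{kj}\partial_H F$, which combined with $\sum W^*_{ik}\partial_W F = c\,s_W$ and $\sum H^*_{kj}\partial_H F = c\,s_H$ forces $c(s_W-s_H)=0$. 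Assuming $c=0$, to go from the constrained Hessian test \eqref{def:secondordermodified} to the unconstrained one \eqref{def:secondorder} I would decompose a generic \eqref{main}-admissible direction $(U,Y)$ as $(U,Y)=(U',Y')+\lambda(W^*,-H^*)$ with $\lambda=(\sum U+\sum Y)/(s_W-s_H)$, so that $(U',Y')$ is \eqref{main2}-admissible (the boundary and strict-gradient conditions transfer because $W^*$ and $H^*$ are nonzero only at indices where admissibility is automatic). The cross term $\langle (W^*,-H^*),\nabla^2 F\,(U',Y')\rangle$ simplifies to $\langle\nabla_H F,Y'\rangle-\langle\nabla_W F,U'\rangle$, which vanishes at $c=0$ because $\langle\nabla_W F, U'\rangle$ and $\langle\nabla_H F, Y'\rangle$ are zero on admissible directions; and the pure-scaling Hessian $(W^*,-H^*)^\top\nabla^2 F(W^*,-H^*)=-4\langle W^*H^*-V,W^*H^*\rangle=-cC$ is zero too. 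Hence $(U,Y)^\top\nabla^2 F(U,Y)=(U',Y')^\top\nabla^2 F(U',Y')\ge 0$. The degenerate subcase $s_W=s_H$ is handled analogously using $(W^*,H^*)$ in place of $(W^*,-H^*)$, whose scaling Hessian is $8\|W^*H^*\|_F^2\ge 0$ and whose cross term vanishes again by the same FOSP argument.

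\emph{Main obstacle.} The hard part is the branch $c\ne 0$, which forces $s_W=s_H=C/2$. The admissible direction $(W^*,-H^*)$ already yields $-cC\ge 0$ via the SOSP condition, hence $c\le 0$ and the size bound $\|W^*H^*\|_F^2\le\langle V,W^*H^*\rangle\le\|V\|_F\|W^*H^*\|_F$, i.e.\ $\|W^*H^*\|_F\le\|V\|_F$. Excluding $c<0$ is precisely where the threshold $C>2r(nm)^{1/4}\sqrt{\|V\|_F}$ enters essentially: the balanced mass $s_W=s_H=C/2$ together with $\|W^*H^*\|_F\le\|V\|_F$ should contradict either the SOSP inequalities obtained from shifted admissible directions $(W^*,-H^*)+\mu(U_0,Y_0)$ for carefully chosen sum-zero $(U_0,Y_0)$, or the very value-equality of Part~1, by producing via the rescaling trick a nearby feasible competitor with strictly smaller $F$—contradicting the local minimality implicit in the SOSP condition. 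Quantifying this contradiction using the given threshold on $C$ is the main technical difficulty, after which the decomposition argument above finishes the second-order inclusion.
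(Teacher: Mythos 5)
You have the right architecture (rescale columns to meet the mass constraint for the value equality; prove $c=0$ and then transfer the Hessian test), but the step you explicitly defer --- excluding $c\neq 0$ --- is not a finishing detail: it is the core of the theorem and the only place where the threshold on $C$ actually does work (it is Lemma~\ref{l:main1} in the paper). Your global scaling direction $(W^*,-H^*)$ cannot close this branch: its quadratic form equals exactly $-cC$, which only yields $c\le 0$, and for $c<0$ it gives no contradiction. The missing idea is to \emph{localize to a single column}. When $c\neq 0$, the first-order conditions force $\norm{W^{*k}}_1=\norm{H^*_k}_1$ for every $k$ (paper's Lemma~\ref{l:sum}), so the heaviest column satisfies $\norm{W^{*k}}_1=\norm{H^*_k}_1\ge C/2r$. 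The direction supported on that column alone ($W^s_{ik}=W^*_{ik}$, $H^s_{kj}=-H^*_{kj}$, zeros elsewhere) is admissible and sum-zero, and by Lemma~\ref{l:main_technical} its quadratic form is $2\langle V-W^*H^*,\,W^{*k}H^*_k\rangle\le 2\norm{V}_F\norm{W^{*k}H^*_k}_F-2\norm{W^{*k}H^*_k}_F^2$, which is strictly negative because $\norm{W^{*k}H^*_k}_F\ge \norm{W^{*k}}_1\norm{H^*_k}_1/\sqrt{nm}\ge C^2/(4r^2\sqrt{nm})>\norm{V}_F$. Your global facts ($c\le 0$, $\norm{W^*H^*}_F\le\norm{V}_F$) cannot substitute for this: the contradiction comes from concentrating mass $C/2r$ in a single rank-one block, not from whole-matrix quantities.

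Two smaller problems. In the degenerate branch $s_W=s_H$ of your transfer argument, replacing $(W^*,-H^*)$ by $(W^*,H^*)$ does not work as stated: the identity $(W^*,-H^*)^\top\nabla^2F=(-\nabla_WF,\nabla_HF)$ has no analogue for $(W^*,H^*)$ (extra terms involving $W^*H^*$ appear because $F$ is not homogeneous), so the cross term does not vanish via the FOSP conditions. The paper instead shifts by the single-column direction for the heaviest column $k$ and rules out the degenerate case $\norm{W^{*k}}_1=\norm{H^*_k}_1$ there by the same negative-quadratic-form computation as above. Finally, in Part~1 your Cauchy--Schwarz across columns gives the bound $2\sqrt{2r}\,(nm)^{1/4}\sqrt{\norm{V}_F}$, which exceeds the threshold $2r(nm)^{1/4}\sqrt{\norm{V}_F}$ when $r=1$; bounding each column's mass separately (as the paper does, using $\norm{W^*H^*}_F\le\norm{V}_F$ at a stationary point) avoids this constant-factor loss.
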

The proof of the  main theorem  follows easily by combining Theorem~\ref{t:main2}   with Theorem~\ref{thm:panageas}.
 The first part of Theorem~\ref{t:main2} is a consequence of   the following result, proven  in Section~\ref{sec:lem1}.
  \begin{lemma}\label{lem:equivalent}
For any first-order stationary point  of \eqref{main}  
 there exists a first-order stationary point of \eqref{main2} with the same value.
\end{lemma}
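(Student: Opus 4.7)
The plan is to construct the desired FOSP of \eqref{main2} by diagonal rescaling of the columns of $W^*$ and the matching rows of $H^*$. Given a FOSP $(W^*,H^*)$ of \eqref{main}, I would consider the one-parameter family $(\tilde W,\tilde H)=(W^*D,\,D^{-1}H^*)$ with $D=\diag(d_1,\dots,d_r)$ and $d_k>0$. Since $\tilde W\tilde H=W^*H^*$, the objective value is preserved. A direct computation using $\nabla_W F=2(WH-V)H^{\top}$ and $\nabla_H F=2W^{\top}(WH-V)$ shows that under this reparametrization the partials transform as $\partial F/\partial \tilde W_{ik}=d_k^{-1}\,\partial F/\partial W^*_{ik}$ and $\partial F/\partial \tilde H_{kj}=d_k\,\partial F/\partial H^*_{kj}$, so the complementary slackness conditions of \eqref{def:KKT} transfer verbatim to $(\tilde W,\tilde H)$. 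Setting the common Lagrange multiplier $c$ of \eqref{def:KKT2} to $0$, it then suffices to pick $D$ so that the simplex constraint $\sum_{i,k}\tilde W_{ik}+\sum_{k,j}\tilde H_{kj}=C$ holds.

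Letting $a_k=\sum_i W^*_{ik}$ and $b_k=\sum_j H^*_{kj}$, the constraint becomes $g(d):=\sum_{k=1}^r(d_k a_k+b_k/d_k)=C$. The map $g$ is continuous on $(0,\infty)^r$; in the generic case where all $a_k,b_k>0$, AM--GM yields $\min g=2\sum_k\sqrt{a_kb_k}$ and the image of $g$ is $[2\sum_k\sqrt{a_kb_k},\infty)$. By the intermediate value theorem, it is enough to prove $2\sum_k\sqrt{a_kb_k}<C$.

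This is the heart of the argument, and the step where the threshold $C>2r(nm)^{1/4}\sqrt{\norm{V}_F}$ enters. First, complementary slackness at the FOSP (summing $W^*_{ik}\,\partial F/\partial W^*_{ik}=0$ over all $i,k$) yields $\norm{W^*H^*}_F^2=\langle V,W^*H^*\rangle$, hence by Cauchy--Schwarz $\norm{W^*H^*}_F\le\norm{V}_F$. Second, the $\ell_1/\ell_2$ inequalities $\norm{W^{*k}}_1\le\sqrt{n}\,\norm{W^{*k}}_2$ and $\norm{H^*_k}_1\le\sqrt{m}\,\norm{H^*_k}_2$ give $a_kb_k\le\sqrt{nm}\,\norm{W^{*k}(H^*_k)}_F$. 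Third, because the outer products $W^{*k}(H^*_k)$ are entrywise non-negative their pairwise inner products are non-negative, so $\sum_k\norm{W^{*k}(H^*_k)}_F^2\le\norm{W^*H^*}_F^2\le\norm{V}_F^2$. Two applications of Cauchy--Schwarz then bound $\sum_k\sqrt{a_kb_k}$ by $r^{3/4}(nm)^{1/4}\sqrt{\norm{V}_F}$, which is strictly below $C/2$ by hypothesis, and continuity of $g$ supplies the required $D$.

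The hardest part, in my view, is orchestrating this inequality chain so that the $\ell_1/\ell_2$ conversion, the non-negativity-based decoupling of the rank-one components $W^{*k}(H^*_k)$, and the FOSP-level identity $\norm{W^*H^*}_F\le\norm{V}_F$ line up to give exactly the threshold appearing in the hypothesis on $C$. A subsidiary subtlety is the handling of degenerate coordinates where $a_k=0$ or $b_k=0$: the corresponding term in $g$ either vanishes identically or has open image $(0,\infty)$, but since $C$ strictly exceeds the infimum, these coordinates are easily absorbed. Finally, one must check that the common-multiplier requirement in \eqref{def:KKT2} is consistent across the $W$ and $H$ blocks, which is automatic because the NMF FOSP forces the relevant partials to vanish on the support, making $c=0$ a legitimate choice for both.
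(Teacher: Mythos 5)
Your proposal is correct and follows essentially the same route as the paper: rescale the columns of $W^*$ and rows of $H^*$ by a positive diagonal matrix (which preserves the product, the value, and the KKT conditions with multiplier $c=0$), then use the FOSP bound $\norm{W^*H^*}_F\le\norm{V}_F$ together with the $\ell_1/\ell_2$ conversion and the intermediate value theorem to hit the constraint level $C$. The only differences are minor: you prove $\norm{W^*H^*}_F\le\norm{V}_F$ directly from complementary slackness where the paper cites an external result, and your decoupling of the rank-one components yields a slightly sharper constant ($r^{3/4}$ in place of $r$).
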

%
The second part of Theorem \ref{t:main2} heavily relies on the following result, which is proven in Section \ref{s:proof_lemma_1}.
\begin{lemma}\label{l:main1}
Any second-order stationary point   
of  the problem  \eqref{main2} necessarily satisfies  $c=0$. In particular, any  second-order stationary point   
of    \eqref{main2} is a first-order stationary point of \eqref{main}.
\end{lemma}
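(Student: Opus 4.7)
The plan is by contradiction: assume $c\neq 0$ at a SOSP $(W^*,H^*)$ of \eqref{main2}. I will first use an Euler-type identity together with the threshold on $C$ to rule out $c<0$, and then exhibit an explicit tangent direction in the critical cone along which the Hessian of $F$ is strictly negative when $c>0$, contradicting the SOSP condition. Once $c=0$ is proved, the first-order conditions for \eqref{main2} coincide verbatim with those for \eqref{main}, yielding the ``in particular'' clause.

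The starting point is the identity $cC = 4\bigl(\|W^*H^*\|_F^2 - \langle V, W^*H^*\rangle\bigr)$. It comes from multiplying the FOSP relation $\partial F/\partial W^*_{ik}=c$ by $W^*_{ik}$ (off-support entries are annihilated automatically), doing the same for $H^*$, summing, and recognising the resulting expression through the explicit forms of $\nabla_W F$ and $\nabla_H F$. Next, I would extract stronger information from the FOSP by exploiting the rescaling symmetry of NMF: for any diagonal $\Delta$, the curve $\bigl(W^*(I+\epsilon\Delta),\,(I+\epsilon\Delta)^{-1}H^*\bigr)$ preserves $W^*H^*$ and hence $F$, so its first derivative at $\epsilon=0$ must vanish. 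Substituting the FOSP gradient values into that derivative produces $c\sum_k \delta_k(u_k-v_k)=0$ for every $\Delta$, with $u_k:=\sum_i W^*_{ik}$ and $v_k:=\sum_j H^*_{kj}$. Since $c\neq 0$, this forces $u_k=v_k=:a_k$ for all $k$, so $\sum_k a_k=C/2$.

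Pigeonhole then yields some $k^\star$ with $a_{k^\star}\geq C/(2r)$. Since $W^*H^*=\sum_k W^*_{:,k} H^*_{k,:}$ is a sum of entrywise non-negative rank-one matrices, its Frobenius norm dominates that of the $k^\star$-th block, so $\|W^*H^*\|_F \geq \|W^*_{:,k^\star}\|_2 \|H^*_{k^\star,:}\|_2$. Applying $\|x\|_2\geq \|x\|_1/\sqrt{d}$ for non-negative $x\in \R^d$ to both factors gives $\|W^*H^*\|_F \geq a_{k^\star}^2/\sqrt{nm}\geq C^2/(4r^2\sqrt{nm})$. The hypothesis $C>2r(nm)^{1/4}\sqrt{\|V\|_F}$ is tuned precisely so that this beats $\|V\|_F$, and Cauchy--Schwarz $\langle V, W^*H^*\rangle\leq \|V\|_F\|W^*H^*\|_F$ then forces the right-hand side of the identity to be positive, giving $c>0$. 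To reach the contradiction I take $(\dot W,\dot H):=(W^*,-H^*)$: every coordinate constraint of the critical cone is automatic (whenever $W^*_{ik}$ or $H^*_{kj}$ vanishes, so does the corresponding entry of $\dot W$ or $\dot H$), and the sum constraint $\sum_{i,k}\dot W_{ik}+\sum_{k,j}\dot H_{kj}=0$ holds because $\sum W^*=\sum H^*=C/2$ thanks to $u_k=v_k$. Since $\dot W H^*+W^*\dot H=0$, the Hessian quadratic form collapses, leaving $-4\langle W^*H^*-V, W^*H^*\rangle = -cC<0$, violating the SOSP condition.

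The main obstacle is excluding $c<0$: the Hessian is a quadratic form in the direction, so flipping a tangent's sign cannot turn a non-negative value into a negative one, and any one-parameter scaling family produces a non-negative Hessian in this regime. The threshold on $C$, together with the non-negativity of $W^*,H^*$, is precisely what delivers the lower bound $\|W^*H^*\|_F>\|V\|_F$ that rules this case out a priori through the identity above; everything else reduces to verifying critical-cone conditions and expanding the Hessian.
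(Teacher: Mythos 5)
Your proof is correct and follows essentially the same route as the paper's: the balance identity $\sum_i W^*_{ik}=\sum_j H^*_{kj}$ (the paper's Lemma~\ref{l:sum}, which you derive via the rescaling symmetry rather than by direct summation), the pigeonhole on the heaviest column combined with the $\ell_1$--$\ell_2$ comparison and the threshold on $C$ to force a Frobenius norm exceeding $\norm{V}_F$, and a scaling direction of the form $(W,-H)$ in the critical cone along which the Hessian is strictly negative. The only organizational difference is that you use the global direction $(W^*,-H^*)$ after first pinning down the sign of $c$ via the identity $cC=4(\norm{W^*H^*}_F^2-\la V,W^*H^*\ra)$, whereas the paper works with the single-column direction $(W^{*k},-H^*_k)$ and obtains negativity of the quadratic form in one shot via Cauchy--Schwarz, with no case split on the sign of $c$.
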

Lastly, note  that Lemma~\ref{l:main1} combined  with Theorem~\ref{thm:panageas} imply the following  claim:
\textit{If  we apply  the MWU to problem \eqref{main2} with $C > 2 r (nm)^{\frac{1}{4}}\sqrt{\norm{V}_{F}}$, the  generated  sequence of matrices will converge (almost certainly) to a pair of matrices $(W^*,H^*)$ that is a  FOSP of the problem \eqref{main}. }
Although the latter claim is not enough for our initial goal (finding pair of matrices that are second-order stationary points for NMF \eqref{def:secondorder}), it is the basic step of the proof of Theorem~\ref{t:main2}, that is presented in Section~\ref{s:proof_main_theorem}.

In Section~\ref{s:experiments}, we present the results of several experimental evaluations indicating that the MWU defined in Algorithm~\ref{alg:MWU} converges to the optimal pair of matrices. 

\subsection{An {illustrative} 
example}
Before proceeding we exhibit the above discussion in a very simple but illustrative example. Consider the following instance of the NMF problem:
\begin{equation*}
\min \{  (1 - x y)^2:   x,y \geq 0\}.
\end{equation*}
\noindent For the above optimization problem the
set of first-order stationary points (Equation \eqref{def:KKT}) is the union of the sets,
$\{(x,0): x \geq 0\}$,$\{(0,y): y \geq 0\}$, $\{(x,y): x\cdot y = 1, x \geq 0 , y \geq 0\}$. While the
set of the second-order stationary points (Equation~\eqref{def:secondorder}) is just the set $\{(x,y): x\cdot y = 1, x \geq 0 , y \geq 0\}$.
This simple example indicates the interest in finding second-order stationary points (Equation~\eqref{def:secondorder})
since the set of first order stationary points (Equation~\eqref{def:KKT}) contains very bad solutions.

Now consider the same problem with an additional simplicial constraint:
\begin{equation*}
\min \{ (1 - x y)^2:  \  x + y = 1,    \ x,y \geq 0\}.
\end{equation*}
\noindent In this case the
set of first-order stationary points (Equation~\eqref{def:KKT2}) is $\{(0,1) , (1,0) , (1/2 , 1/2)\}$. While the set of second order stationary points (Equation~\eqref{def:secondordermodified}) is the
set $\{(1/2 , 1/2)\}$. The above means that if we run the MWU algorithm runs with parameter $C=1$ then for almost all initializations, the produced sequence of solutions will converge to $(1/2 , 1/2)$, since this is the only second-order stationary point of the above minimization problem. Now notice that $(1/2,1/2)$ is not a good solution (for the initial optimization problem), the value of the global optimal (for the initial optimization problem) is $0$. More importantly, the point $(1/2 , 1/2)$ does not satisfy Equation~\eqref{def:secondorder}, which was our initial algorithmic goal. The reason for this is that the parameter $C$ is not chosen large enough (notice that $C$ must be selected as $2r(nm)^\frac{1}{4}\sqrt{\norm{V}_F})$ which is greater than $1$) and thus Theorem~\ref{t:main2} does not apply.

Now consider the problem with the same additional simplicial constraint, but with $C=4$.
\[\min \{ (1 - x y)^2: \  x + y = 4, \  x,y \geq 0 \}.\]
In this case the
set of first-order stationary points (Equation~\eqref{def:KKT2}) is $\{(0,1) , (1,0) , (2 , 2) , (2-\sqrt{3},2+\sqrt{3}),(2+\sqrt{3},2-\sqrt{3})\}$. While the set of second order stationary points (Equation~\eqref{def:secondordermodified}) is $\{(2-\sqrt{3},2+\sqrt{3}),(2+\sqrt{3},2-\sqrt{3})\}$. As a result, MWU algorithm with parameter $C=4$
converge either to $(2-\sqrt{3},2+\sqrt{3})$
or to $(2+\sqrt{3},2-\sqrt{3})$ for almost all initializations. Notice that both $(2-\sqrt{3},2+\sqrt{3})$ and
$(2+\sqrt{3},2-\sqrt{3})$ satisfy Equation~\eqref{def:secondorder} (in fact they are optimal solutions). This should not be a surprise since for $C=4$, Theorem~\ref{t:main2} applies and thus
MWU converges to second-order stationary points of Equation~\eqref{def:secondorder} .

\subsection{Calculating  derivatives}

The entries of the gradient of $F$ 
are given by:
\begin{equation}\label{col:gradients}
\begin{aligned}
 \frac{\partial F}{\partial W_{ik}} & =
    -2 \sum_{j=1}^m \left (V_{ij} - \sum_{\ell = 1}^r W_{i \ell} H_{\ell j } \right)H_{kj},
    \\
     \frac{\partial F}{\partial H_{kj}} & =
    -2 \sum_{i=1}^n \left (V_{ij} - \sum_{\ell = 1}^r W_{i \ell} H_{\ell j } \right)W_{ik},
\end{aligned}
\end{equation}
whereas  the entries of its  Hessian are  given by:
\begin{equation}\label{col:hessian}
\begin{aligned}
& \frac{\partial^2 F}{\partial^2 W_{ik}}  = 2\sum_{j=1}^m H_{kj}^2 \  \text{ and }  \ \frac{\partial^2 F}{\partial^2 H_{kj}} = 2\sum_{i=1}^n W_{ik}^2\\
&  \frac{\partial^2 F}{\partial W_{ik} \partial W_{i'\ell}}  = 0 \ \text{ and } \  \frac{\partial^2 F}{\partial H_{kj} \partial H_{\ell j'}} = 0 \textrm{ for }i\neq i', j\neq j'\\
 & \frac{\partial^2 F}{\partial W_{ik} \partial W_{i \ell}}  = 2\sum_{j=1}^m H_{\ell j} H_{kj}\\
& \frac{\partial^2 F}{\partial H_{kj} \partial H_{\ell j}}  = 2\sum_{i=1}^n W_{i\ell}W_{i k}\\
& \frac{\partial^2 F}{\partial W_{ik} \partial H_{kj}}  = -2V_{ij} + 2\sum_{\ell =1}^rW_{i\ell} H_{\ell j} + 2W_{ik}H_{kj}\\
& \frac{\partial^2 F}{\partial W_{ik} \partial H_{\ell j}}  = 2W_{i\ell} H_{kj}.
\end{aligned}
\end{equation}

\noindent Using
~\eqref{col:gradients} and~\eqref{col:hessian}
we arrive at  the following useful result: 

\begin{lemma}\label{l:main_technical}
For any pair of matrices $(W,H)$ and index $k \in \{1,\ldots,r\}$, let
the pair of matrices $(\tilde{W},\tilde{H})$ such that \textbf{1)} $\tilde{W}_{ik} = W_{ik}$ and $\tilde{W}_{i\ell} = 0$, \textbf{2)}
$\tilde{H}_{kj} = -H_{kj}$ and $\tilde{H}_{\ell j} = 0$. Then
\begin{equation*}\label{eq:KKTsecondorder2}
({\rm vec}(\tilde{W}) ^{\top}, {\rm vec}(\tilde{H}) ^{\top}) \cdot \nabla^2 F(W,H)=
\left(
\begin{array}{c}
\vec{0}^{\top} \\ -\nabla_{W^k}F \\ \vec{0}^{\top} \\ \nabla_{H_k}F \\ \vec{0}^{\top}
\end{array}
\right),
\end{equation*}
where $\vec{0}$ denotes the zero column vector of appropriate size.
\end{lemma}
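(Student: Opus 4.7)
My plan is to prove the lemma by a direct block-by-block computation of the row vector $({\rm vec}(\tilde{W})^\top, {\rm vec}(\tilde{H})^\top)\nabla^2 F(W,H)$, exploiting the sparsity of $(\tilde{W},\tilde{H})$: only the $k$-th column of $\tilde{W}$ (equal to $W^{k}$) and the $k$-th row of $\tilde{H}$ (equal to $-H_{k}$) contribute. Consequently, the coordinate of the output at any variable, say $W_{i'\ell'}$, is a sum of Hessian entries weighted by $W_{ik}$ and by $-H_{kj}$. I would partition the output coordinates into four groups depending on whether the differentiating variable lies in (i) the $k$-th column of $W$, (ii) another column of $W$, (iii) the $k$-th row of $H$, or (iv) another row of $H$, and verify the claim in each group using the formulas in~\eqref{col:hessian}.

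For the groups (ii) and (iv) that must vanish, the heart of the computation is a single algebraic cancellation. For a coordinate $W_{i'\ell'}$ with $\ell'\neq k$, the $\tilde{W}$-contribution is $W_{i'k}\cdot 2\sum_j H_{\ell'j}H_{kj}$, while the $\tilde{H}$-contribution is $\sum_j(-H_{kj})\cdot 2W_{i'k}H_{\ell'j}$; the two terms are exact negatives, so the sum is zero. A symmetric cancellation using the formulas for $\partial^2 F/\partial H_{kj}\partial H_{k'j'}$ and $\partial^2 F/\partial W_{ik}\partial H_{k'j'}$ (with $k'\neq k$) handles group (iv). The cross-index Hessian entries in~\eqref{col:hessian} are tailor-made for this: they are rank-one outer products of rows/columns of $W,H$, which is exactly what is needed for the two contributions to align and cancel.

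For the nonzero blocks (i) and (iii), I would substitute the special mixed second derivative $\partial^2 F/\partial W_{ik}\partial H_{kj} = -2V_{ij}+2\sum_\ell W_{i\ell}H_{\ell j}+2W_{ik}H_{kj}$ and combine it with the pure second derivatives $\partial^2 F/\partial^2 W_{ik}=2\sum_j H_{kj}^2$ (resp.\ $\partial^2 F/\partial^2 H_{kj}=2\sum_i W_{ik}^2$). The key observation is that the $+2W_{ik}H_{kj}$ cross-term, after multiplication by $-H_{kj}$ and summation over $j$, exactly cancels the $W_{i'k}\cdot 2\sum_j H_{kj}^2$ diagonal contribution; the remaining two terms $-2\sum_j V_{i'j}H_{kj}+2\sum_j\sum_\ell W_{i'\ell}H_{\ell j}H_{kj}$ reassemble via~\eqref{col:gradients} into exactly $-\partial F/\partial W_{i'k}$. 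A mirror-image calculation yields $+\partial F/\partial H_{kj'}$ in group (iii).

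The main obstacle is not mathematical depth but organizational bookkeeping: the Hessian is a fourth-order tensor with several case distinctions in~\eqref{col:hessian} (diagonal vs.\ off-diagonal, pure vs.\ mixed, coincident vs.\ distinct column/row indices), and the sign from $\tilde{H}_{kj}=-H_{kj}$ must be propagated correctly through every term. Once the indexing is carefully aligned with the formulas in~\eqref{col:gradients} and~\eqref{col:hessian}, the cancellations above complete the proof with no further ingredients.
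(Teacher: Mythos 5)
Your proposal is correct and follows essentially the same route as the paper's proof: a direct coordinate-by-coordinate expansion of the row-vector--Hessian product using the formulas in~\eqref{col:gradients} and~\eqref{col:hessian}, with the identical cancellations (the $2W_{ik}H_{kj}$ cross-term against the diagonal term in the $k$-th blocks, and the rank-one mixed entries against each other in the remaining blocks). The case split into the four groups and the reassembly of the leftover terms into $\mp\partial F/\partial W_{ik}$, $\pm\partial F/\partial H_{kj}$ match the paper's argument exactly.
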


\begin{proof}
Let us start by proving that for all the entries
of the vector corresponding respectively to $W_{ik}$ and $H_{kj}$,

\begin{itemize}
    \item $\left [({\rm vec}(\tilde{W}) ^{\top}, {\rm vec}(\tilde{H}) ^{\top}) \cdot \nabla^2 F(W,H)
\right]_{W_{ik}}
= -\frac{\partial F}{\partial W_{ik}}$

\item $\left [({\rm vec}(\tilde{W}) ^{\top}, {\rm vec}(\tilde{H}) ^{\top}) \cdot \nabla^2 F(W,H)
\right]_{H_{kj}}
= \frac{\partial F}{\partial H_{kj}}$
\end{itemize}

\noindent By direct calculation we get that,
\begin{eqnarray*}
\left [({\rm vec}(\tilde{W}) ^{\top}, {\rm vec}(\tilde{H}) ^{\top}) \cdot \nabla^2 F(W,H)
\right]_{W_{ik}}
&=& \frac{\partial^2 F}{\partial^2 W_{ik}} \tilde{W_{ik}} + 2\sum_{\ell \neq k}\frac{\partial^2 F}{\partial W_{ik} \partial W_{i \ell}}\tilde{W_{i\ell}}\\
&+& 2\sum_{j=1}^m \frac{\partial^2 F}{\partial W_{ik} \partial H_{kj}}\tilde{H}_{kj} +
2\sum_{j=1}^m\sum_{\ell \neq k} \frac{\partial^2 F}{\partial W_{ik} \partial H_{\ell j}}\tilde{H}_{\ell j}\\
&=& \frac{\partial^2 F}{\partial^2 W_{ik}} \tilde{W}_{ik} + 2\sum_{j=1}^m \frac{\partial^2 F}{\partial W_{ik} \partial H_{kj}}\tilde{H}_{kj}\\
&=& \frac{\partial^2 F}{\partial^2 W_{ik}} W_{ik} + 2\sum_{j=1}^m \frac{\partial^2 F}{\partial W_{ik} \partial H_{kj}}(-H_{kj})\\
&=& 2\sum_{j=1}^m W_{ik}H_{kj}^2
+ 2\sum_{j=1}^m \left[V_{i j} - \sum_{\ell=1}^r W_{i\ell} H_{\ell j}\right ] H_{k j}
- 2\sum_{j=1}^m W_{ik}H_{kj}^2\\
&=& -\frac{\partial F}{\partial W_{ik}}
\end{eqnarray*}
where the last equality follows by~(\ref{col:gradients}). Respectively for~$H_{kj}$.
Up next we prove that
\[\left [({\rm vec}(\tilde{W}) ^{\top}, {\rm vec}(\tilde{H}) ^{\top}) \cdot \nabla^2 F(W,H)
\right]_{W_{ik'}} = \left [({\rm vec}(\tilde{W}) ^{\top}, {\rm vec}(\tilde{H}) ^{\top}) \cdot \nabla^2 F(W,H)
\right]_{H_{k'j}}
= 0\]

\begin{eqnarray*}
\left [({\rm vec}(\tilde{W}) ^{\top}, {\rm vec}(\tilde{H}) ^{\top}) \cdot \nabla^2 F(W,H)
\right]_{W_{ik'}}
&=& \frac{\partial^2 F}{\partial^2 W_{ik'}} \tilde{W}_{ik'} + 2\sum_{\ell \neq k'}\frac{\partial^2 F}{\partial W_{ik'} \partial W_{i \ell}}\tilde{W}_{i\ell}\\
&+& 2\sum_{j=1}^m \frac{\partial^2 F}{\partial W_{ik'} \partial H_{k'j}}\tilde{H}_{k'j} +
2\sum_{j=1}^m\sum_{\ell \neq k'} \frac{\partial^2 F}{\partial W_{ik'} \partial H_{\ell j}}\tilde{H}_{\ell j}\\
&=& 2\frac{\partial^2 F}{\partial W_{ik'} \partial W_{i k}}\tilde{W}_{ik}
+ 2\sum_{j=1}^m \frac{\partial^2 F}{\partial W_{ik'} \partial H_{k j}}\tilde{H}_{k j}\\
&=& 2\frac{\partial^2 F}{\partial W_{ik'} \partial W_{i k}}W_{ik}
+ 2\sum_{j=1}^m \frac{\partial^2 F}{\partial W_{ik'} \partial H_{k j}}(-H_{k j})\\
&=&2\sum_{j=1}^m H_{k' j}H_{kj}W_{ik}
-2\sum_{j=1}^m W_{ik}H_{k'j}H_{kj} = 0.
\end{eqnarray*}
\noindent Respectively for $H_{k'j}$.
\end{proof}

\section{Poof of Lemma \ref{lem:equivalent}}\label{sec:lem1}
By Theorem 6, \cite{HD}, 
any FOSP  $(W^*,H^*)$ of  the problem \eqref{main} 
satisfies
$\norm{W^*H^*}_F \leq \norm{V}_F$. Consider the pair matrices $(\hat{W}, \hat{H})$  defined  by:
\[\hat{W}^k = \sqrt{\frac{\norm{H^*_k}_1}{\norm{W^{*k}}_1}} W^{*k} \ \text{  and  }  \ \hat{H}_k = \sqrt{\frac{\norm{W^{*k}}_1}{\norm{H^*_k}_1}} H^*_k,\]
 for each $k \in [r]$. Without loss of generality we assumed that both $\norm{W^{*k}}_1$ and $\norm{H^*_k}_1$ are not equal to zero (if one of these terms is zero, we may assume and the other is also zero and the inequality below still holds).
 By definition of $(\hat{W}, \hat{H})$  we have  that
 \[\hat{W} \hat{H} = \sum_{k=1}^r \hat{W}^k  \hat{H}_k = \sum_{k=1}^r W^{*k}  H^*_k = W^*  H^*,\]
  and thus $F(W^*,H^*) = F(\hat{W},\hat{H})$, i.e., these two pairs of matrices have the same  value.
Furthermore,  note that \begin{align*}
\norm{\hat{W}^k}_{1} + \norm{\hat{H}_k}_1 &= 2 \left({\norm{\hat{W}^{k}}_1 \norm{\hat{H}_k}_1}\right)^{{1\over 2}}= 2\left(\sum_{i,j} W^*_{ik} H^*_{k j}\right )^\frac{1}{2} \\&= 2\left [(\sum_{i,j} W^*_{ik} H^*_{k j})^2 \right]^\frac{1}{4}\\
&\leq 2\left[n m \sum_{i,j} W^{*2}_{ik} H^{*2}_{k j} \right]^\frac{1}{4}
\\
&{\le}  2(nm)^{\frac{1}{4}} \norm{{W^*H^*}}_{F}^\frac{1}{2} \leq  2(nm)^{\frac{1}{4}} \norm{V}_{F}^\frac{1}{2}=C.
\end{align*}
Lastly, consider the parametrized family  of matrix pairs  $(t \hat{W},\frac{1}{t}\hat{H})$ and increase $t$ until
 \[t \left (\sum\limits_{i,k}\hat{W}_{ik}\right) + {1\over t}\left(\sum\limits_{k,j}\hat{H}_{kj} \right)= C.\]
 This leads to  a pair of matrices that are FOSP of \eqref{main2}.

\section{Proof of Lemma~\ref{l:main1}}\label{s:proof_lemma_1}
Let $(W^*,H^*)$ be a SOSP of the problem  \eqref{main2} with  $c\ne~0$. We will arrive at a contradiction by showing
the existence of   a pair of matrices ${(W^s,H^s)}$ satisfying:

\begin{align}
&\frac{\partial F(W^*,H^*)}{\partial W_{ik}^*} > c \implies W^s_{ik}=0,  \label{eq:1} \\
 & \frac{\partial F(W^*,H^*)}{\partial H_{kj}^*} > c \implies H^s_{kj}=0,\label{eq:2} \\
& \sum_{i,k}W_{ik}^s + \sum_{k,j}H_{kj}^s =0,  \label{eq:3} \\
&  s^\top \nabla^2 F(W^*,H^*)s < 0, \label{eq:4}
\end{align}
where we set $s^{\top} =  (\textrm{vec}(W^s) ^{\top}, \textrm{vec}(H^s) ^{\top})$.
\noindent
We start with  a technical claim that will be used up next.

\begin{lemma}\label{l:sum}
Let $(W^*,H^*)$ be a FOSP for \eqref{main2},   for some constant $c \neq 0$. Then for all $k \in \{1,\ldots,r\}$, we have
\[\sum_{i=1}^n W_{ik}^* = \sum_{j=1}^m H_{kj}^*.\]
\end{lemma}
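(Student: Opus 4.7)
The plan is to use an Euler-style identity coming from the fact that $F(W,H) = \|V - WH\|_F^2$ is separately homogeneous of degree $2$ in each of the rank-one pieces $W^k H_k$. Concretely, I would show by direct calculation that
\[
\sum_{i=1}^n W^*_{ik} \cdot \frac{\partial F(W^*,H^*)}{\partial W_{ik}} \;=\; \sum_{j=1}^m H^*_{kj} \cdot \frac{\partial F(W^*,H^*)}{\partial H_{kj}}
\]
for every $k \in \{1,\ldots,r\}$, and then exploit the FOSP conditions for \eqref{main2} to rewrite both sides.

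First I would compute both sums using the gradient formulas in \eqref{col:gradients}. Plugging in,
\[
\sum_{i} W^*_{ik}\,\frac{\partial F}{\partial W_{ik}}
= -2\sum_{i,j} W^*_{ik} H^*_{kj}\bigl(V_{ij} - (W^*H^*)_{ij}\bigr),
\]
and an identical expression (up to reordering of the sum) arises for $\sum_j H^*_{kj}\,\partial F/\partial H_{kj}$. Hence the two sums coincide.

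Next, I would use the FOSP conditions from \eqref{def:KKT2} to evaluate each side. The key observation is that in every case we have $W^*_{ik}\,\partial F(W^*,H^*)/\partial W_{ik} = c\, W^*_{ik}$: indeed, if $W^*_{ik}>0$ the partial derivative equals $c$ by the stationarity condition, and if $W^*_{ik}=0$ both sides vanish. The analogous identity holds for the $H$-variables. Summing, we obtain
\[
c \sum_{i=1}^n W^*_{ik} \;=\; \sum_i W^*_{ik}\frac{\partial F}{\partial W_{ik}} \;=\; \sum_j H^*_{kj}\frac{\partial F}{\partial H_{kj}} \;=\; c \sum_{j=1}^m H^*_{kj}.
\]
Since $c \neq 0$ by hypothesis, we may divide through and conclude $\sum_i W^*_{ik} = \sum_j H^*_{kj}$, as desired.

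The whole argument is essentially a one-line manipulation; I do not foresee any serious obstacle. The only subtlety worth flagging is that the identity $\sum_i W^*_{ik}\,\partial F/\partial W_{ik} = c \sum_i W^*_{ik}$ combines both the equality-on-support and the trivially-zero off-support cases of the KKT conditions simultaneously, which is why we obtain an identity over the full index set rather than just the support.
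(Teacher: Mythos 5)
Your proof is correct and is essentially the same as the paper's: both arguments establish the identity $\sum_i W^*_{ik}\,\partial F/\partial W_{ik}=\sum_j H^*_{kj}\,\partial F/\partial H_{kj}$ by substituting the gradient formulas and swapping the order of summation, then use the KKT conditions of \eqref{main2} to replace each partial derivative by $c$ on the support (with off-support terms vanishing trivially) and divide by $c\neq 0$. The only difference is presentational — you isolate the Euler-type identity as a separate step, whereas the paper runs the whole computation as a single chain of equalities.
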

\begin{proof}Direct calculation reveals that:

\begin{align*}
c \sum_{i=1}^n W_{ik}^* &= \sum_{i: W_{ik}^* > 0}^n W_{ik}^* \cdot c\\
&= \sum_{i: W_{ik}^* > 0} W_{ik}^* \cdot \frac{\partial F(W^*,H^*)}{\partial W_{ik}}\\
&= \sum_{i=1}^n W_{ik}^* \left[-2 \sum_{j=1}^m \left (V_{ij} - \sum_{\ell = 1}^r W_{i \ell}^* H_{\ell j }^* \right)H_{kj}^*\right]\\
&=
\sum_{j=1}^m H_{kj}^* \left[-2 \sum_{i=1}^n\left(V_{ij} - \sum_{\ell = 1}^r W_{i \ell}^* H_{\ell j }^*\right)W_{ik}^*\right]\\
&=
\sum_{j:H_{kj}^*>0 }^m H_{kj}^* \left[-2 \sum_{i=1}^n \left(V_{ij} - \sum_{\ell = 1}^r W_{i \ell}^* H_{\ell j }^*\right)W_{ik}^*\right]\\
&=c \sum_{j=1}^mH_{kj}^*.
\end{align*}
\end{proof}
We are now ready to describe the construction of the matrices $(W^s,H^s)$ satisfying \eqref{eq:1}-\eqref{eq:4}.  Setting
$$k = \argmax_{ 1 \leq \ell \leq r} \left (\sum_{i=1}^nW^*_{i\ell} + \sum_{j=1}^m H^*_{\ell j} \right),$$
it follows immediately by Lemma \ref{l:sum}  that
\be\label{dscfvvf}
\norm{W^{*k}}_{1} = \norm{H^{*}_k}_{1} \geq C / 2r.
\ee
The matrices $W^s,H^s$ are defined as follows: $W_{ik}^s = W_{ik}^*$, $H_{kj}^s = -H_{kj}^*$, while $W_{i \ell }^s = H_{\ell j}^s = 0$, i.e., the $k$-th column of $W^s$ coincides with the $k$-th column of $W^*$ and all other entries are zero.   It is immediate from the definition that Conditions~\eqref{eq:1}-\eqref{eq:2}  are satisfied, since $\frac{\partial F(W^*,H^*)}{\partial W_{ki}} >c$
implies $W_{ik}^* = 0$ (by Condition~\eqref{def:KKT2}) and thus $W_{ik}^s = 0$, by the definition  of $W^s$  (and analogously for $H^*_{kj}$).
Moreover, Condition~\eqref{eq:3} is satisfied since
\be
\sum_{i=1}^n \sum_{k=1}^r W^s_{ik} +
\sum_{j=1}^m \sum_{k=1}^r H^s_{kj}=\sum_{i=1}^nW^*_{ik}-\sum_{j=1}^m H_{kj}^*=0,
\ee
where the last equality follows from  Lemma \ref{l:sum}.

It remains to verify Condition~\eqref{eq:4}. By Lemma~\ref{l:main_technical}, we have that

\begin{eqnarray*}
s^{\top} \nabla^2F(W^*,H^*) s
&=&
({\rm vec}(W^s) ^{\top}, {\rm vec}(H^s) ^{\top}) \cdot \nabla^2 F(W^\ast,H^\ast)
\cdot ({\rm vec}(W^s), {\rm vec}(H^s))
\\
&=&
({\rm vec}(W^s) ^{\top}, {\rm vec}(H^s) ^{\top}) \cdot \begin{pmatrix}0 \\ -\nabla_{W^k}F(W^\ast,H^\ast) \\ 0 \\ \nabla_{H_k}F(W^\ast,H^\ast) \\ 0 \end{pmatrix}
\\
&=&\sum_{i=1}^n \left(-\frac{\partial F(W^\ast,H^\ast)}{\partial W_{ik}}\right) W^s_{ik} + \sum_{i=1}^n \frac{\partial F(W^\ast,H^\ast)}{\partial H_{kj}} H^s_{kj}\\
&=& \sum_{i=1}^n \left(-\frac{\partial F(W^\ast,H^\ast)}{\partial W_{ik}}\right) W^\ast_{ik} + \sum_{i=1}^n \frac{\partial F(W^\ast,H^\ast)}{\partial H_{kj}} (-H^\ast_{kj})\\
&=& 2\sum_{i=1}^n\sum_{j=1}^m V_{ij}W_{ik}^*H_{kj}^*
-2\sum_{i=1}^n\sum_{j=1}^m W_{ik}^*H_{kj}^*\sum_{\ell=1}^r W_{i \ell}^* H_{\ell j}^*\\
&\leq&
2\sum_{i=1}^n\sum_{j=1}^m V_{ij}W_{ik}^*H_{kj}^* -2\sum_{i=1}^n\sum_{j=1}^m W_{ik}^{*2}H_{kj}^{*2}\\
&\leq&
2\norm{V}_F \norm{W^{*k}  H_k^*}_F -2\norm{ W^{*k}  H_k^*}_{F}^2
\end{eqnarray*}
where  the first inequality follows from the fact that all the entries $W^*_{i \ell},H^*_{\ell j}$ are positive and the second from the Cauchy-Schwarz inequality. In order to satisfy Condition~\eqref{eq:4}, it remains  to show that $\norm{V}_F - \norm{W^{*k} H_k^*}_F < 0$. Indeed,
\begin{eqnarray*}
\norm{W^{*k} H_k^*}_F^2 &=& \sum_{i=1}^n\sum_{j=1}^m W_{ik}^{*2}H_{kj}^{*2}\\
&\geq& \frac{1}{nm} \left(\sum_{i=1}^n\sum_{j=1}^m W_{ik}^* H_{kj}^*\right)^2\\
&=& \frac{1}{nm} \left(\sum_{i=1}^n W_{ik}^* \right)^2 \left(\sum_{j=1}^m  H_{kj}^*\right)^2\\
&\geq& \frac{1}{16 nm} \left(\norm{W^{*k}}_{1}\right)^4,\\
\end{eqnarray*}
where the last inequality follows  by Claim~\ref{l:sum}.
Combining the above with \eqref{dscfvvf}
we arrive at,
\[\norm{W^{*k} H_k^*}_F \geq \frac{C^2}{4r^2\sqrt{nm}} > \norm{V}_F,\] where we used that  $C > 2r(nm)^{\frac{1}{4}}\sqrt{\norm{V}_F}$.

\section{Proof of Theorem~\ref{t:main2}}\label{s:proof_main_theorem}
Let $(W^*,H^*)$ be a second-order stationary point   of the problem \eqref{main2} which is not  a second-order stationary point of the unconstrained problem \eqref{main}.
Since  $(W^*,H^*)$ is a SOSP    of  \eqref{main2},  Lemma \ref{l:main1} implies  that $c=0$.

As in  the proof of Lemma~\ref{l:main1}, we will arrive at a contradiction by  identifying  a non-zero pair of matrices $(W^s,H^s)$ that satisfy the following $4$ conditions:

\begin{align}
 & \frac{\partial F(W^*,H^*)}{\partial W_{ik}}>0 \implies W^s_{ik}=0,\label{eqq:1} \\
& \frac{\partial F(W^*,H^*)}{\partial H_{kj}} > 0 \implies H^s_{kj}=0,\label{eqq:2}\\
& \sum_{i,k} W_{ik}^s + \sum_{k,j}H_{kj}^s =0, \label{eqq:3} \\
& s^{\top} \nabla^2 F(W^*,H^*)s < 0, \label{eqq:4}
\end{align}
where we set $s^{\top} =  (\textrm{vec}(W^s) ^{\top}, \textrm{vec}(H^s) ^{\top}).$

As  $(W^*,H^*)$  is not a SOSP of \eqref{main},
there exist a pair of matrices $(\hat{W},\hat{H})$ satisfying conditions \eqref{eqq:1}, \eqref{eqq:2} and~\eqref{eqq:4}. In the remainder of the proof, we use the pair $(\hat{W},\hat{H})$ to construct a new pair $(W^s,H^s)$ that also satisfies~\eqref{eqq:3}.
In the construction of the pair $(W^s,H^s)$, we will also use the pair $(\tilde{W},\tilde{H})$ of Lemma~\ref{l:main_technical} for $k = \argmax_{1 \leq \ell \leq r}\{\sum_{i,\ell}W^*_{i\ell} + \sum_{\ell,j}H^*_{\ell j}\}$. We remind that $\tilde{W}_{ik} = W^\ast_{ik},\tilde{H}_{kj} = -H^\ast_{kj}$ and $\tilde{W}_{i\ell} = \tilde{H}_{\ell j} = 0$. Moreover ($\vec{0}$ denotes the zero column vector of appropriate size),
\[
({\rm vec}(\tilde{W}) ^{\top}, {\rm vec}(\tilde{H}) ^{\top}) \cdot \nabla^2 F(W^\ast,H^\ast)=
\left(\begin{array}{c}
\vec{0}^{\top}\\ -\nabla_{W^k}F(W^\ast,H^\ast)\\ \vec{0}^{\top} \\\nabla_{H_k}F(W^\ast,H^\ast)\\ \vec{0}^{\top}
\end{array}\right).
\]

\noindent We are now ready to describe our construction. Consider the following family of matrix pairs:
\[(W^t,H^t) = (\tilde{W},\tilde{H}) + t (\hat{W},\hat{H})\footnote{slighlty abusing notation since, $W^t$ up to now means the $t$-th column},\]
which we show satisfies  Conditions~\eqref{eqq:1}, \eqref{eqq:2} and \eqref{eqq:4}.
Indeed, assuming that 
 $\frac{\partial F(W^*,H^*)}{\partial W_{ik}} > 0$, it follows  by \eqref{eqq:1} that  $\hat{W}_{ik}=0$, and furthermore,  as $(W^*,H^*)$  is a SOSP for \eqref{main2}, we also have that $W^*_{ik} =0$ and thus $\tilde{W}_{ik} =0$.
 Combining these we get   that  $W^t_{ik} = 0$, i.e., $(W^t,H^t)$ satisfies \eqref{eqq:1} for all~$t$.  Analogously,   if $\frac{\partial F(W^*,H^*)}{\partial H_{kj}} > 0$ it follows that  $H^t_{kj}=0.$

Lastly, we show that for  all $t\neq 0$, the pair of matrices $(W^t,H^t)$ satisfy \eqref{eqq:4}.
To simplify notation let,
\begin{itemize}
    \item $s^{\top}_t = (\textrm{vec}(W^t)^{\top}, \textrm{vec}(H^t) ^{\top})$, respectively $\hat{s}$ for $(\hat{W},\hat{H})$
    and $\tilde{s}$ for $(\tilde{W},\tilde{H})$.

    \item $\nabla^2 = \nabla^2F(W^*,H^*)$.

\end{itemize}
Then, we have that
\begin{eqnarray*}
s_t^{\top} \nabla^2 s_t &=& (\tilde{s} + t \hat{s})^\top \nabla^2 (\tilde{s} + t \hat{s})
= \tilde{s}^{\top} \nabla^2 \tilde{s} + 2 t\tilde{s}^{\top} \nabla^2 \hat{s} + t^2 \hat{s}^\top \nabla^2 \hat{s}\\
&=&
\underbrace{\left(0,-\nabla_{W^k}F(W^\ast,H^\ast),0,\nabla_{H_k}F(W^\ast,H^\ast),0 \right)^\top \cdot \tilde{s}}_{=0}\\
&+& \underbrace{2t (0,-\nabla_{W^k}F(W^\ast,H^\ast),0,\nabla_{H_k}F(W^\ast,H^\ast),0)^\top \cdot \hat{s}}_{=0}\\
&+& t^2 \underbrace{\hat{s}^\top \nabla^2 \hat{s}}_{<0}
\end{eqnarray*}
where for the second equality we use  Lemma~\ref{l:main_technical}.
The first term is zero since if $\tilde{W}_{ik} \neq 0$ implies that $W^\ast_{ik} \neq 0$ which implies that $\frac{\partial F(W^\ast,H^\ast)}{\partial W_{ik}}=0$ since $(W^\ast,H^\ast)$ is a SOSP (analogously for $H_{kj}$). By Conditions~\eqref{eqq:1}-\eqref{eqq:2}, we know that if $\frac{\partial F(W^\ast,H^\ast)}{\partial W_{ik}}>0$ then $\hat{W}_{ik}=0$ (analogously for $H_{kj}$), meaning that the second term is also zero. Finally, the third term is strictly negative as $(\hat{W},\hat{H})$ satisfies \eqref{eqq:4}.

 We complete the proof  of Theorem~\ref{t:main2}  by noting that for
 \[ t = - \frac{\sum_{i=1}^n\sum_{k=1}^r \tilde{W}_{ik} + \sum_{k=1}^r\sum_{j=1}^m \tilde{H}_{kj}}{\sum_{i=1}^n\sum_{k=1}^r \hat{W}_{ik} + \sum_{k=1}^r\sum_{j=1}^m \hat{H}_{kj}}
 = -\frac{\sum_{i=1}^n W_{i k}^*  - \sum_{j = 1}^m H_{kj}^*}{\sum_{i=1}^n\sum_{k=1}^r \hat{W}_{ik} + \sum_{k=1}^r\sum_{j=1}^m \hat{H}_{kj}},
 \]the remaining  Condition~\eqref{eqq:3}  is satisfied. An important detail is that $t \neq 0$ since $\sum_{i=1}^n W_{i k}^*  = \sum_{j = 1}^m H_{kj}^*$ contradicts with the assumption that $(W^*,H^*)$ is SOSP (recall the proof of Lemma~\ref{l:main1} in Section~\ref{s:proof_lemma_1}).
 \section{Examples and Experiments}\label{s:experiments}
In this section we present experimental evaluations on the quality of the solutions produced by the Multiplicative Weight Update
algorithm (Algorithm~\ref{alg:MWU}), which indicate convergence to the global minimizers. More precisely, for several
values of the parameters $n$ and $r$, we generated random $n \times n$ matrices with entries in $[0,1]$
and rank $r$ and we checked the quality of the solutions $W_{n \times r},H_{n \times r}$ produced by MWU.
This was done so as to ensure that the global minimum of the respective NMF problem is $0$, which served
as a benchmark on the quality of the solutions produced by MWU.
For all the conducted experiments MWU was able to find a solution with value arbitrarily
close to $0$ meaning that it always converged to the right factorization.
Figure~\ref{fig:boat1} illustrates the number of iterations needed MWU to converge to
solutions with error smaller than $1\%$ of the initial error, for various values of $n,r$.
\begin{center}
\begin{figure}[h!]
\centering
  \includegraphics[width=300pt]{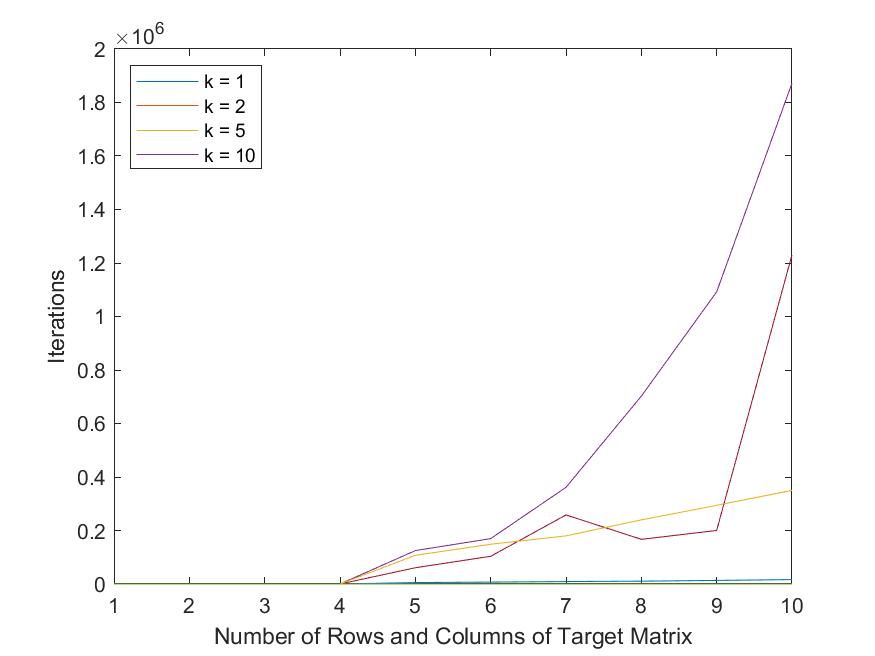}
  \caption{The figure depicts the number of iterations MWU needs to produce a solution with error smaller than $1\%$
  of the initial error. $V$ is a random matrix with rank $r$ and entries in $[0,1]$.}
  \label{fig:boat1}
\end{figure}
\end{center}
An important differentiation of the Multiplicative Weight Update depicted in Algorithm~\ref{alg:MWU} with
the previous multiplicative weight update and
gradient based approaches, is its concurrent way of updating the entries
of $W$ and $H$. Both the matrices $W^{t+1}$ and $H^{t+1}$ are updated
by using only the values of $W^{t}$ and $H^{t}$. We remark that that concurrency
in the updating step is very desirable,
since it permits more efficient implementations in parallel computing environments.
To the best of our knowledge, Algorithm~\ref{alg:MWU} is the first iterative method
for non-negative matrix factorization that converges while performing its updates in
a concurrent way. In all the previous gradient based approaches
(both in multiplicative weight algorithms and in alternating least squares)
the convergence properties heavily rely on the fact that
the entries of $W$ and $H$ are updating in an alternating way,
while their concurrent counterparts may fail to converge.
In Example~\ref{e:1} we present such a case for the Lee-Seung algorithm
in which the original version of the algorithm converges, while the concurrent
version fails to converge.
\begin{example}\label{e:1}
Consider the matrices
$V=
\begin{pmatrix}
1 & 0\\
0 & 1
\end{pmatrix}$
and $W^0= H^0 =
\begin{pmatrix}
1 & 1\\
1 & 1
\end{pmatrix}$ initialize the
concurrent version of Lee-Seung algorithm i.e.
$W_{ik}^{t+1} = W_{ik}^t \frac{(W_t^{T} V)_{ik}}{
(W_t^{T}W_t H_t)_{ik}}$ and
$H_{kj}^{t+1} = H_{kj}^t \frac{(VH^\top_t)_{kj}}{
(W_t H_t H_t^\top)_{kj}}$.
Then $(W^t,H^t)$ oscillates.
\end{example}
\begin{center}
\begin{figure}[h!]
\centering
  \includegraphics[width=300pt]{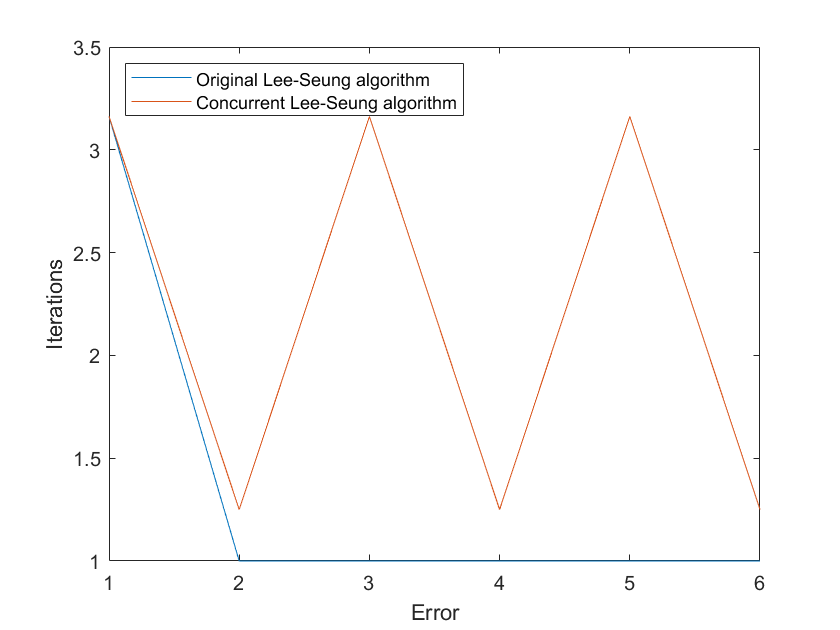}
  \caption{The error of the concurrent Lee-Seung algorithm versus the error of the original algorithm in the NMF instance described in Example~\ref{e:1}.}
\end{figure}
\end{center}

\section*{Acknowledgements} Ioannis Panageas and Xiao Wang were supported by SRG ISTD 2018 136, NRF-NRFFAI1-2019-0003 and NRF2019NRF-ANR2019. Stratis Skoulakis was supported by NRF 2018 Fellowship NRF-NRFF2018-07. Antonios Varvitsiotis was supported by SRG ESD 2020 154.
\bibliographystyle{plain}
\bibliography{NMF}

\end{document}